\documentclass{article}
\usepackage{iclr2026_conference,times}
\iclrfinalcopy

\usepackage{amsmath,amsfonts,bm}









\def\eqref#1{equation~\ref{#1}}









\def\1{\bm{1}}










\DeclareMathAlphabet{\mathsfit}{\encodingdefault}{\sfdefault}{m}{sl}
\SetMathAlphabet{\mathsfit}{bold}{\encodingdefault}{\sfdefault}{bx}{n}













\usepackage{hyperref}
\usepackage{url}
\usepackage{booktabs}
\usepackage{multirow}
\usepackage{graphicx}
\usepackage{mathtools}
\usepackage{subcaption}

\usepackage{amsthm}
\theoremstyle{plain}
\newtheorem{theorem}{Theorem}[section]
\newtheorem{lemma}[theorem]{Lemma}
\newtheorem{proposition}[theorem]{Proposition}

\theoremstyle{definition}
\newtheorem{definition}[theorem]{Definition}

\theoremstyle{remark}

\title{SHAKE-GNN: Scalable Hierarchical Kirchhoff-Forest Graph Neural Network}

\author{Zhipu CUI \& Johannes Lutzeyer \\
LIX, Ecole Polytechnique, IP Paris \\
Paris, France \\
\texttt{\{zhipu.cui, johannes.lutzeyer\}@polytechnique.edu}
}

\begin{document}

\maketitle

\begin{abstract}

Graph Neural Networks (GNNs) have achieved remarkable success across a range of learning tasks. However, scaling GNNs to large graphs remains a significant challenge, especially for graph-level tasks. In this work, we introduce SHAKE-GNN, a novel scalable graph-level GNN framework based on a hierarchy of Kirchhoff Forests, a class of random spanning forests used to construct stochastic multi-resolution decompositions of graphs. SHAKE-GNN produces multi-scale representations, enabling flexible trade-offs between efficiency and performance. We introduce an improved, data-driven strategy for selecting the trade-off parameter and analyse the time-complexity of SHAKE-GNN. Experimental results on multiple large-scale graph classification benchmarks demonstrate that SHAKE-GNN achieves competitive performance while offering improved scalability. 

\end{abstract}

\section{Introduction}

Graph classification is a fundamental task in graph machine learning, in which one aims to determine the class of an entire graph based on its structure and attributes. This task is widely applicable across domains such as molecular property prediction, e.g., identifying active versus inactive compounds \citep{wu2020comprehensive}, social network analysis, e.g., bot detection \citep{zhou2020graph}, and communication system diagnostics, e.g., faulty topology detection \citep{zhang2019graph}. With the rapid advancement of Graph Neural Networks (GNNs), many approaches have achieved impressive results by leveraging local neighbourhood information through message passing \citep{gilmer2017neural}. Despite their success, significant challenges remain in terms of scalability and global structure modelling, especially when dealing with large and complex graphs, such as protein graphs and social networks among others.

Early GNNs such as Graph Convolutional Networks (GCNs) \citep{kipf2017semi}, Graph Attention Networks (GATs) \citep{velickovic2018graph}, and Graph Isomorphism Networks (GINs) \citep{xu2019powerful} rely on iterative message passing between neighbouring nodes. While these models exhibit linear time complexity in the number of edges and are effective for small to moderately sized graphs, their scalability can be further enhanced. In particular, repeated aggregation steps across multiple layers may incur redundant computations and memory overhead when deployed on large-scale or densely connected graphs. Furthermore, their design typically emphasises fine-grained, local feature interactions at the node level, which often leads to over-fragmented representations that fail to capture long-range dependencies and global semantic structure, which are critical components in tasks requiring global graph understanding.

To address these limitations, we propose a new model called \textit{Scalable Hierarchical Kirchhoff-Forest Graph Neural Network (SHAKE-GNN)}, a novel hierarchical graph neural architecture that constructs multi-resolution representations via a \textit{Kirchhoff Forest-based} coarse-graining method. A spanning forest is a collection of spanning trees, each covering a connected component of a graph without forming cycles. Building on this, Kirchhoff Forests (KFs) are probabilistic ensembles of spanning forests, where the probability of each forest is determined by the graph Laplacian. This formulation provides a principled way to capture structural dependencies and connectivity patterns within graphs. Instead of relying on random or flat clustering schemes, we introduce a principled, layer-wise node merging strategy guided by Kirchhoff Forests (KFs), which generates a hierarchy of coarsened graphs. Each coarse node aggregates a substructure from the previous layer, using mean or sum pooling for node and edge features depending on the dataset.

Our main contributions are summarised as follows:
\begin{itemize}
    \item We define a hierarchical graph decomposition pipeline based on KFs and integrate it with the model architecture SHAKE-GNN, which effectively leverages multi-resolution structural information for graph classification tasks. This framework is applicable across diverse graph domains and scales efficiently to large datasets.
    \item We introduce an improved strategy for selecting the resolution parameter $q$, balancing computational complexity and information loss, and provide a theoretical time complexity analysis that demonstrates the scalability advantages of our architecture over conventional GNN frameworks in multi-resolution settings.
    \item We empirically validate our approach through extensive experiments, showing that it achieves scalability compared to state-of-the-art GNN models.
\end{itemize}

\section{Related Work}

\textbf{Graph Neural Networks (GNNs).} Graph Neural Networks (GNNs) have emerged as a powerful paradigm for representation learning on graph-structured data. GNNs function by recursively aggregating information from local neighbourhoods. The seminal Graph Convolutional Network (GCN) \citep{kipf2017semi} combines a node’s features with those of its neighbours and calculates a weighted average over neighbourhoods. Subsequent extensions such as Graph Attention Networks (GAT) \citep{velickovic2018graph} introduce adaptive weighting via an attention mechanism, while Graph Isomorphism Networks (GIN) \citep{xu2019powerful} employ injective aggregation functions to enhance expressive capacity. These models have been successfully applied to a range of tasks. Typical examples include molecular property prediction \citep{hu2020open}, citation graph classification \citep{sen2008collective,yang2016revisiting}, and social network modeling \citep{hamilton2017inductive,li2020graph}.

Despite their empirical success, conventional GNNs encounter significant limitations when scaled to large or densely connected graphs. Their reliance on multi-hop message passing increases computational overhead, particularly in deep architectures \citep{dwivedi2022long}. Moreover, their inherently local aggregation mechanisms often struggle to capture long-range dependencies and global semantics, resulting in representations that are focused on neighborhood-level patterns.

\textbf{Graph Pooling and Coarsening.} To mitigate the locality and scalability issues inherent in GNNs, various graph pooling and coarsening techniques have been developed to enable hierarchical representation learning. Differentiable Pooling (DiffPool) \citep{ying2018hierarchical} introduces differentiable assignment matrices to softly cluster nodes into coarse representations, facilitating end-to-end hierarchical learning. Graph clustering using weighted cuts (Graclus) \citep{dhillon2007weighted} adopts a greedy coarsening strategy based on graph cuts, while minimum-cut-based pooling (minCUT pooling) \citep{bianchi2020spectral} imposes spectral regularisation to enforce partition quality.

Traditional graph partitioning algorithms, such as METIS \citep{karypis1998fast}, employ multilevel coarsening and refinement heuristics to minimize edge cuts while maintaining balanced partition sizes. This has enabled GNNs to scale to graphs with billions of nodes. A recent extension, LPMetis \citep{zeng2023lpsgnn}, integrates label propagation into the coarsening phase to improve scalability and partition quality, particularly for distributed GNN training. Nevertheless, METIS and its variants are inherently non-stochastic.

\paragraph{Personalised PageRank for Node-Level Scalability.}  
To mitigate the computational inefficiencies associated with recursive message passing in Graph Neural Networks (GNNs), recent research has proposed decoupling feature propagation from transformation via \emph{Personalised PageRank} (PPR) scores. An illustrative instance of this approach is \textsc{PPRGo} \citep{bojchevski2020scaling}, which utilises a precomputed and sparsified approximation to facilitate efficient feature aggregation.

Despite their efficiency, PPR-based models such as \textsc{PPRGo} are node-centric in design. Each node’s prediction is computed conditioned solely on its personalised neighbourhood. This localised perspective, while computationally advantageous, limits their applicability to graph-level tasks, such as molecular property prediction or program classification, where coarse-grained, hierarchical, or long-range interactions are significant.

\section{Preliminary}

In this section, we introduce the fundamental concepts required for our framework. We begin by recalling classical notions of spanning trees and spanning forests, which serve as the basis of Kirchhoff Forests.

\begin{definition}[Spanning Tree]
A \emph{spanning tree} of $\mathcal{G}$ is a subgraph $\mathcal{T}=(\mathcal{V}_\mathcal{T},\mathcal{E}_{\mathcal{T}})$ such that:
\begin{enumerate}
    \item $\mathcal{T}$ includes all vertices of $\mathcal{G}$, i.e., $\mathcal{V}_\mathcal{T}=\mathcal{V}$;
    \item $\mathcal{T}$ is acyclic, i.e., it contains no cycles;
    \item $\mathcal{T}$ is connected, i.e., there exists a path between any two vertices in $\mathcal{T}$.
\end{enumerate}
\end{definition}

\begin{definition}[Spanning Forest]
A \emph{spanning forest} is a subgraph $\mathcal{F}=(\mathcal{V},\mathcal{E}_{\mathcal{F}})$ such that:
\begin{enumerate}
    \item Each connected component of $\mathcal{F}$ is a spanning tree of a connected component of $\mathcal{G}$;
    \item $\mathcal{F}$ is acyclic and covers all vertices $\mathcal{V}$.
\end{enumerate}
\end{definition}

\textbf{Kirchhoff Forest-Based Graph Decomposition. \citep{bressan2023estimating, barthelme2025estimating}} Kirchhoff Forests (KFs) offer a principled and probabilistically grounded framework for hierarchical graph decomposition. Rooted in spectral graph theory, KFs generate structured, multi-scale partitions of nodes by sampling random spanning forests from a distribution derived from the graph Laplacian \citep{bressan2023estimating}. The resolution of the decomposition is modulated by a temperature-like parameter \( q > 0 \), which controls the probabilistic distribution over forest structures and emulates fine-to-coarse abstraction.

Formally, let \( \mathcal{G} = (\mathcal{V}, \mathcal{E}) \) be an undirected graph. Kirchhoff Forests define a distribution over rooted spanning forests \( \mathcal{F} \subseteq \mathcal{E} \) with root set \( \mathcal{R} \subseteq \mathcal{V} \), governed by the parameter \( q \). The probability of sampling a particular forest \( \mathcal{F} \) is given by,
\[
\mathbb{P}_q(\mathcal{F}) \propto \prod_{v \in \mathcal{V}} \left( \frac{q}{q + d_v} \right)^{\delta_v(\mathcal{F})} \prod_{(i, j) \in \mathcal{F}} \frac{1}{q + d_i},
\]
where \( d_v \) denotes the degree of node \( v \), and \( \delta_v(\mathcal{F}) \) is an indicator function equal to 1 if \( v \) is a root in \( \mathcal{F} \), and 0 otherwise. This formulation induces a trade-off between the number and size of trees in the forest, allowing the parameter \( q \) to control the expected number of connected components and the granularity of the resulting partition.

Sampling from this distribution is typically performed using Wilson’s algorithm \citep{wilson1996generating}, adapted with priority-based root selection to incorporate the influence of \( q \). Priority-based root selection refers to a biasing mechanism whereby nodes are assigned selection priorities proportional to their restart probabilities. At each step in Wilson's algorithm, a node initiates a loop-erased random walk with stochastic restarts: it either terminates at a new root with probability \( \frac{q}{q + d_v} \), or continues to a randomly chosen neighbour with probability \( \frac{d_v}{q + d_v} \). The resulting collection of walks yields a spanning forest rooted at a dynamically constructed set of nodes.

To construct a hierarchical, multi-resolution decomposition, a strictly decreasing sequence of resolution parameters \( \ q_1 > q_2 > \cdots > q_{N_q}\ \) is applied recursively. Each level produces increasingly fine-grained structural abstractions. To maintain consistency and reduce redundant computation across levels, the \emph{Reboot} \citep{wilson1996generating} algorithm incrementally adjusts previously sampled forests by locally reassigning root nodes in response to a smaller resolution parameter \( q' < q \). This approach preserves the probabilistic semantics of the distribution while avoiding full re-sampling at each stage of the hierarchy.

\textbf{Graph Hierarchy Construction.} We input a graph. Each node \( v \in \mathcal{V} \) is associated with a feature vector in the node feature matrix \( \mathbf{X}_v \in \mathbb{R}^{n \times f_v} \), where \( f_v \) is the dimensionality of node attributes and $n$ is the number of nodes. Similarly, each edge \( e \in \mathcal{E} \) is associated with a feature vector in the edge feature matrix \( \mathbf{X}_e \in \mathbb{R}^{m \times f_e} \), where \( f_e \) denotes the dimensionality of edge attributes and $m$ is the number of edges. The graph connectivity is represented by the adjacency matrix \( \mathbf{A} \in \mathbb{R}^{n \times n} \), and the corresponding graph-level target label is denoted as \( y \in \mathcal{Y} \).

To construct hierarchical representations suitable for multi-resolution graph classification, we preprocess each input graph using a coupled KF decomposition pipeline. Given a graph \( \mathcal{G} \), we begin by extracting its structural and feature information, including node features, edge features, and connectivity. We then apply the Wilson algorithm to generate a collection of randomised spanning forests, each governed by a resolution parameter \( q \) drawn from a strictly decreasing sequence \( q_1, q_2, \ldots, q_{N_q} \). Each value of \( q \) controls the distribution over forest structures, where larger values yield coarser partitions with fewer components, and smaller values induce finer-grained decompositions.

At each level of this hierarchy, nodes in the original graph are assigned to disjoint subsets, referred to as forest components, which form the basis for coarse nodes in the coarsened graph. Importantly, the features of each coarse node are obtained by directly averaging or summing the feature vectors of its constituent nodes from the original input graph, rather than from features from previously coarsened levels. Similarly, the edge attributes between coarse nodes are computed by averaging the features of all edges in the original graph that connect nodes across the corresponding components. By recursively applying this decomposition procedure across the sequence of \( q \)-values, we obtain a hierarchy of coarsened graphs, each capturing progressively higher-order structural abstractions. 

For each pair of consecutive levels, we define a partition matrix \( \mathbf{P}^{(\ell_{d-1}, \ell_d)} \) to specify and realize the coarse-graining operation between level \( \ell_{d-1} \) and level \( \ell_d \), where \( \ell_d \in \{1, \ldots, N_q\} \). We extract the following components:

\begin{itemize}
    \item The coarsened graph structure constructed by grouping nodes within each forest component into a supernode;
    \item Node features for each super-node computed by averaging or summing the feature vectors of constituent nodes in the original input graph;
    \item Edge features derived by averaging or summing the attributes of all original edges that connect nodes across corresponding forest components;
    \item Partition matrices \( \mathbf{P}^{(\ell_{d-1}, \ell_d)} \in \mathbb{R}^{n_{\ell_{d-1}} \times n_{\ell_d}} \), where \( n_{\ell_{d-1}} \) and \( n_{\ell_d} \) denote the number of nodes at level \( \ell_{d-1} \) and \( \ell_d \).
\end{itemize}

It is important to note that the partition matrices \( \mathbf{P}^{(\ell_{d-1}, \ell_d)} \) are not computed with respect to the original graph, but are instead derived recursively based on the coarsened structure from the preceding level. For example, if \( \mathbf{P}^{(0,i)} \) and \( \mathbf{P}^{(0,j)} \) denote mappings from the original graph to levels \( i \) and \( j \) respectively, then their product satisfies \( \mathbf{P}^{(i,j)} = \mathbf{P}^{(0,j)} \big(\mathbf{P}^{(0,i)}\big)^{-1}_R \), illustrating the compositional consistency of the hierarchy. A detailed proof is presented in Appendix \ref{app:P-proof}. This recursive construction ensures consistent hierarchical alignment while accommodating the stochastic deviations introduced during random forest generation. Since nodes from different components in the finer level may be grouped into the same supernode at the coarser level, the resulting partition matrix naturally encodes soft associations as continuous values in \([0, 1]\), representing the proportion of contribution from each fine-level node.

This preprocessing yields a hierarchy of graphs \( \{\mathcal{G}^{(0)}, \ldots, \mathcal{G}^{(N_q)}\} \) per graph, where \( \mathcal{G}^{(0)} = \mathcal{G} \) denotes the original graph. The resulting hierarchical dataset is used to train our SHAKE-GNN model.

\section{Methodology}

In industry, graph classification in domains such as molecular property prediction and social network analysis requires models to be scalable. Conventional message-passing GNNs are limited by their local aggregation schemes and insufficient scalability, while pooling-based methods often rely on rigid heuristics. To address these issues, we propose \textit{SHAKE-GNN}, a hierarchical architecture that employs \textit{Kirchhoff Forests} for probabilistically grounded multi-resolution decomposition.

\subsection{Data-Driven $q$-Choice via Information Loss--Complexity Trade-off}

A central component of SHAKE-GNN is the selection of the smoothing parameter $q$, which governs the resolution of the Kirchhoff Forest decomposition. To determine an appropriate value of $q$, we adopt a data-driven strategy that balances information preservation against model complexity. 

Building on the work of \citet{tremblay2023kf}, we introduce an improved framework. Specifically, we define Tikhonov smoothing operators on both nodes and edges, yielding smoothed features $\widehat{\mathbf{X}}_v = \mathbf{K}(q)\mathbf{X}_v$ and $\widehat{\mathbf{X}}_e = \mathbf{K}_e(q)\mathbf{X}_e$, where $\mathbf{K}(q)=q(\mathbf{L}+q\mathbf{I})^{-1}$ and $\mathbf{K}_e(q)=q(\mathbf{L}_e+q\mathbf{I})^{-1}$ are defined on the unnormalised graph Laplacian $\mathbf{L}$ and line-graph Laplacian $\mathbf{L}_e$, respectively. We measure information loss as the average of feature reconstruction error 
\[
\mathcal{L}_{\text{rec}}(q)=\frac{\|\mathbf{X}_v-\widehat{\mathbf{X}}_v\|_F^2}{ \lim_{q \to 0^+}\|\mathbf{X}_v-\widehat{\mathbf{X}}_v\|_F^2},
\]
and Dirichlet energy loss 
\[
\mathcal{L}_{\text{dir}}(q)=\frac{\mathrm{tr}((\mathbf{X}_v-\widehat{\mathbf{X}}_v)^\top\mathbf{L}(\mathbf{X}_v-\widehat{\mathbf{X}}_v))}{\mathrm{tr}(\mathbf{X}_v^\top \mathbf{L}\mathbf{X}_v)}.
\]

This optimisation problem admits a more efficient spectral formulation. Instead of repeatedly computing matrix inverses for different values of $q$, one can perform a single eigendecomposition of $\mathbf{L}$ and $\mathbf{L}_e$, after which the quantities for all $q$ are obtained by simple per-eigenvalue evaluations. A detailed proof is presented in Appendix \ref{app:proofs}.

Let $\mathbf{L} = \mathbf{U}\mathbf{\Lambda}\mathbf{U}^\top$ be the eigendecomposition of the Laplacian, with eigenvalues $\{\mu_i\}$ and eigenvectors $\{u_i\}$. In this basis, the Tikhonov gain is $h_i(q) = \tfrac{q}{\mu_i+q}$, and all quantities decompose into per-eigenmode contributions. 

The node-side reconstruction error reduces to
\[
\|\mathbf{X}_v - \widehat{\mathbf{X}}_v\|_F^2 = \sum_{i=1}^n (1-h_i(q))^2 \,\|\mathbf{U}_i^\top \mathbf{X}_v\|_2^2,
\]
while the Dirichlet energy loss becomes
\[
\mathrm{tr}\!\big((\mathbf{X}_v-\widehat{\mathbf{X}}_v)^\top\mathbf{L}(\mathbf{X}_v-\widehat{\mathbf{X}}_v)\big) 
= \sum_{i=1}^n \mu_i (1-h_i(q))^2 \,\|\mathbf{U}_i^\top \mathbf{X}_v\|_2^2.
\]

Analogously, on the edge side we have
\[
\|\mathbf{X}_e - \widehat{\mathbf{X}}_e\|_F^2 = \sum_{i=1}^m (1-h_i^e(q))^2 \,\|\mathbf{U}_i^{e\top} \mathbf{X}_e\|_2^2,
\]
and
\[
\mathrm{tr}\!\big((\mathbf{X}_e-\widehat{\mathbf{X}}_e)^\top\mathbf{L}_e(\mathbf{X}_e-\widehat{\mathbf{X}}_e)\big) 
= \sum_{i=1}^m \mu_i^e (1-h_i^e(q))^2 \,\|\mathbf{U}_i^{e\top} \mathbf{X}_e\|_2^2,
\]
where $\mathbf{L}_e = \mathbf{U}_e \mathbf{\Lambda}_e \mathbf{U}_e^\top$ is the eigendecomposition of the line-graph Laplacian, and $h_i^e(q)=\tfrac{q}{\mu_i^e+q}$.

To quantify model complexity, we compute the effective degrees of freedom on both the node and edge sides.  
For the graph Laplacian $\mathbf{L}$ with eigenvalues $\{\mu_i\}_{i=1}^n$ and the line-graph Laplacian $\mathbf{L}_e$ with eigenvalues $\{\mu_i^e\}_{i=1}^m$, we define
\[
\begin{aligned}
\mathrm{df}_{\text{node}}(q) 
  &= \frac{1}{n}\sum_{\mu_i>0} \frac{q}{\mu_i+q} 
   = \frac{1}{n} \sum_{\mu_i>0} h_i(q), \\[6pt]
\mathrm{df}_{\text{edge}}(q) 
  &= \frac{1}{m}\sum_{\mu_i^e>0} \frac{q}{\mu_i^e+q} 
   = \frac{1}{m} \sum_{\mu_i^e>0} h_i^e(q),
\end{aligned}
\]
where $h_i(q)=\tfrac{q}{\mu_i+q}$ with eigenvalues $\{\mu_i\}$ of $\mathbf{L}$, and $h_i^e(q)=\tfrac{q}{\mu_i^e+q}$ with eigenvalues $\{\mu_i^e\}$ of the line-graph Laplacian $\mathbf{L}_e$. These spectral formulas enable efficient evaluation of $\mathcal{J}(q)$ and highlight its interpretation as a frequency-selective trade-off, where $q$ acts as a spectral filter modulating the contribution of each eigenmode.

The final objective combines node and edge information losses with a complexity penalty,
\[
\mathcal{J}(q) \;=\; \mathcal{L}_{\text{info,node}}(q) + \mathcal{L}_{\text{info,edge}}(q) + \phi \big(\mathrm{df}_{\text{node}}(q)+\mathrm{df}_{\text{edge}}(q)\big),
\]
where $\phi>0$ controls the trade-off. The optimal resolution parameter is then selected as
\begin{equation}\label{eq:q-star}
q^\star = \arg\min_{q>0} \;\mathcal{J}(q).
\end{equation}

This formulation yields an interpretable criterion: small $q$ values preserve fine-grained information at the expense of complexity, while large $q$ reduce complexity but risk excessive information loss. The proposed trade-off identifies the resolution that best balances these effects for a given dataset.

\subsection{Sequential SHAKE-GNN Architecture}

The \textit{Sequential SHAKE-GNN} architecture is designed to exploit the hierarchical inductive bias of multi-resolution graph decompositions by processing coarse-to-fine structural abstractions in a level-wise sequential fashion. Given an input graph \( \mathcal{G} \), where each node and edge is endowed with a feature vector, the model propagates information through a progressive refinement pipeline that traverses a hierarchy of coarsened graphs.

Let \( \mathbf{X}_v \in \mathbb{R}^{n \times f_v} \) and \( \mathbf{X}_e \in \mathbb{R}^{m \times f_e} \) denote the initial node and edge feature matrices, respectively. These features are first embedded into a shared latent space \( \mathbb{R}^o \) via dedicated encoders,
\[
\mathbf{H}_v^{(0)} = \mathrm{NodeEncoder}(\mathbf{X}_v), \quad \mathbf{H}_e^{(0)} = \mathrm{EdgeEncoder}(\mathbf{X}_e).
\]
Depending on the dataset, the encoders are instantiated either as embedding layers for categorical attributes or as multilayer perceptrons (MLPs) for continuous feature vectors.

The model iteratively processes the coarsened graphs corresponding to the resolution levels \( q_1, \ldots, q_{N_q} \). At each resolution level \( q_i \), a stack of \( L_i \) message passing layers is applied, with updates at each layer \( \ell \in \{1, \ldots, L_{i}\} \) defined as,
\[
\mathbf{h}_{i}^{(\ell)} = \phi_{\text{msg}} \left( \mathbf{h}_i^{(\ell-1)}, \mathbf{h}_j^{(\ell-1)}, \mathbf{e}_{ij}^{(\ell-1)} \right),
\]
where \( \phi_{\text{msg}} \) denotes a message passing function involving trainable parameters, $\mathbf{h}_i^{(\ell)} \in \mathbb{R}^o$ denotes the hidden representation of node $i$ from the neural layer $\ell$, $\mathbf{e}_{ij}^{(\ell)} \in \mathbb{R}^{o}$ represents the edge feature associated with edge $(i,j)$, $j$ indexes the neighbors of node $i$ in the coarsened graph.

Upon completing the processing of message passing at level \( q_i \), the resulting node embeddings \( \mathbf{H}_v^{(L_i)}[q_i] \in \mathbb{R}^{o} \) are propagated to the next finer level \( q_{i+1} \). Specifically, a stochastic alignment matrix \( \mathbf{P}^{(i-1, i)} \in \mathbb{R}^{n_i \times n_{i-1}} \) encodes the distribution correspondence between coarse and fine level nodes,
\[
\mathbf{H}_v^{(0)}[q_{i+1}] = \mathbf{P}^{(i-1, i)} \mathbf{H}_v^{(L_i)}[q_i].
\]

After the GNN layers, a global mean pooling operator aggregates the node-level embeddings into a graph-level representation,
\[
\mathbf{H}_{\mathrm{comp}} = \mathrm{GlobalMeanPool}(\mathbf{H}_v^{(L_{N_q})}[q_{N_q}]).
\]

To further refine the representation, an optional read-out MLP composed of \( L_{\mathrm{MLP}} \) fully connected layers is applied,
\[
\widehat{y} = \mathrm{MLP}(\mathbf{H}_{\mathrm{comp}}) \in \mathbb{R}^{o_y},
\]
where \( o_y \) denotes the output dimensionality corresponding to the number of classes.

\subsection{Time Complexity Analysis}

In this section, we provide a theoretical analysis of the computational complexity of SHAKE-GNN. 
We first estimate the expected size of the coarsened graphs produced by the Kirchhoff Forest decomposition, and then derive the overall time complexity of the model as a function of the resolution parameter \(q\) and the number of resolution levels \(N_q\).

\subsubsection{Expected Coarse Graph Size}

To assess the computational efficiency of our model, it is essential to characterise how the graph size evolves under coarsening. Given an input graph \( \mathcal{G} \), we estimate the expected size of the coarsened graph generated by applying the Kirchhoff Forest (KF) decomposition with resolution parameter \( q \).

\paragraph{Expected Number of Coarse Nodes.}  
In the Wilson-based forest sampling procedure, each node \( v \in \mathcal{V} \) independently becomes the root of a tree with probability
\[
p_v = \frac{q}{q + d_v},
\]
where \( d_v \) is the degree of node \( v \). The expected number of coarse nodes (roots) is thus
\[
\mathbb{E}[|\mathcal{V}_q|] = \sum_{v \in \mathcal{V}} \frac{q}{q + d_v}.
\]
In our complexity calculation, assuming approximate uniform degree \( d_v \approx \bar{d} = \tfrac{2m}{n} \), we obtain
\[
\mathbb{E}[|\mathcal{V}_q|] \approx n  \frac{q}{q + \bar{d}}.
\]

\paragraph{Expected Number of Coarse Edges.}  
An edge \((i,j)\in \mathcal{E}\) contributes to a coarse edge if it connects nodes belonging to different forest components. The probability of being cut can be calculated by \citep{bressan2023estimating, barthelme2025estimating}
\[
\mathbb{P}_{\mathrm{cut}}(i, j) = \frac{2q}{2q + d_i + d_j}.
\]
Hence,
\[
\mathbb{E}[|\mathcal{E}_q|] = \sum_{(i,j)\in\mathcal{E}} \frac{2q}{2q + d_i + d_j}.
\]
With \( d_i \approx d_j \approx \bar{d} \), this simplifies to \citep{bressan2023estimating}
\[
\mathbb{E}[|\mathcal{E}_q|] \approx m  \frac{q}{q + \bar{d}}.
\]

The estimates of nodes and edges numbers show that both shrink by a resolution-dependent factor
\[
r(q) = \frac{q}{q + \bar{d}},
\]
so that
\[
|\mathcal{V}_q| \approx r(q)\,n, 
\qquad
|\mathcal{E}_q| \approx r(q)\,m.
\]

\subsubsection{Time Complexity}

For one resolution level \(q\), the computational cost consists of:

\begin{itemize}
    \item \textbf{Input Embedding.}  
    \(\mathcal{O}\big(r(q)\,n f_v o + r(q)\,m f_e o\big);\)

    \item \textbf{Message Passing and MLPs.}  
    \(\mathcal{O}\big(T \, r(q) (\,m o + \,n M o^2)\big);\)

    \item \textbf{Graph-Level Pooling.}  
    \(\mathcal{O}(r(q)\,n o);\)
\end{itemize}

where \(M\) denotes the depth of the internal multilayer perceptron (MLP), i.e., the number of fully connected layers within each message passing layer. Among these three components, the overall complexity is dominated by the term \textbf{Message Passing and MLPs}.

If the model employs \(N_q\) distinct resolution levels, the total complexity is
\[
\mathcal{O}\!\left( N_q \, r(q) \, T \big(m o + n M o^2\big) \right).
\]

This formulation shows explicitly how the reduction ratio \(r(q)\) controls the trade-off between efficiency and representation capacity across multiple resolutions. Since \(r(q)<1\), the total complexity of SHAKE-GNN is strictly lower than that of a standard GNN.

\section{Result}

To validate the proposed SHAKE-GNN framework, we conduct comprehensive experiments designed to assess both predictive performance and computational efficiency. Our evaluation focuses on whether the incorporation of multi-resolution structures and Kirchhoff Forest-based coarsening can achieve competitive accuracy while significantly reducing training cost. In what follows, we first detail the experimental setup, then analyse the effect of resolution parameter selection, and finally present dataset-specific results with comparisons to standard baselines.

\subsection{Experimental Setup}

To rigorously assess the effectiveness and generalizability of the proposed architecture, we conduct a series of graph classification experiments across multiple benchmark datasets. These datasets are drawn from diverse domains, including molecular chemistry and social network analysis. A description of the datasets is presented in Appendix \ref{app:dataset}.

All experiments are conducted on a single NVIDIA RTX A6000 GPU equipped with 48GB of memory. The computational environment also includes an Intel Xeon W5-3425 CPU and 256GB of system RAM, providing sufficient resources for training efficient large-scale graph neural networks. The operating system is Ubuntu 24.04. The software stack comprises Python 3.11 and PyTorch 2.4.0, with CUDA version 12.4 for GPU acceleration.

\subsection{Resolution Parameter Selection}

All experiments share a common set of training hyperparameters: batch size of 256, learning rate fixed at 0.005, weight decay set to \(1.0 \times 10^{-5}\), random seed 42, and a maximum of 100 training epochs. A constant learning rate scheduler is used throughout.  
To prevent overfitting, we apply early stopping with patience of 10 epochs and a minimum improvement threshold of 0.001.  

For each dataset, we perform a spectral evaluation of $\mathcal{J}(q)$ on the training split and compute the minimiser $q^\star$ defined in Equation \ref{eq:q-star}. The resulting values are then applied throughout the training and evaluation phases. The result is shown in Figure ~\ref{fig:q-allinone}.

By traversing a range of $q$ values, we obtain the combined objective $\mathcal{J}(q)$, whose minimum directly identifies the optimal resolution parameter $q^\star$ defined in Equation \ref{eq:q-star}. This procedure systematically balances information preservation with computational efficiency, and similar evaluations are performed for all datasets.

\begin{figure}
    \centering
    \includegraphics[width=\linewidth]{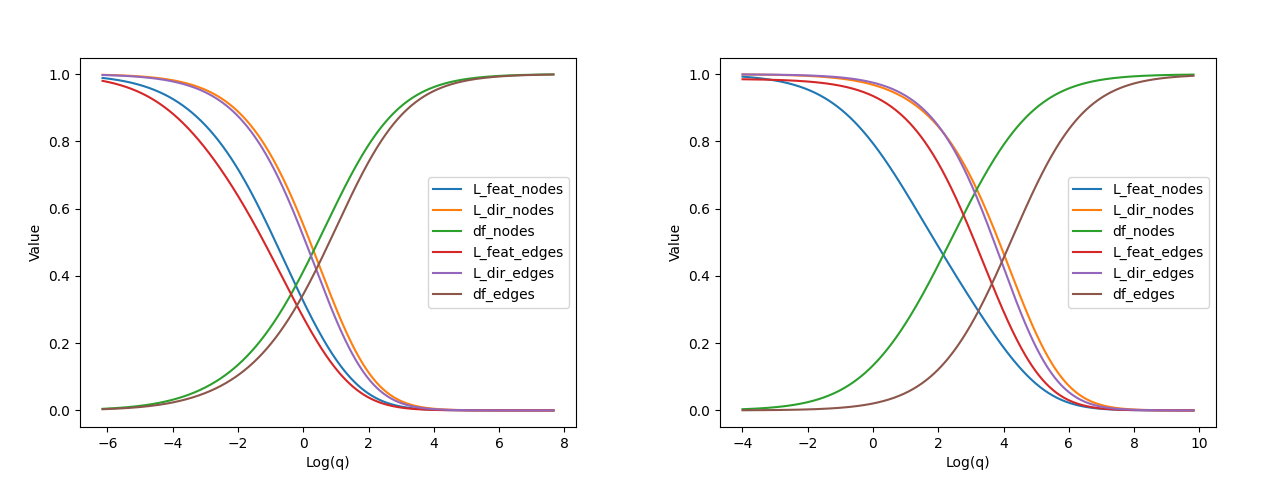}
    \caption{The $q$-choice figure of MOLHIV and PPA dataset, respectively. The six curves correspond to the node feature reconstruction loss, node structural loss, edge feature reconstruction loss, edge structural loss, as well as the node and edge complexity terms. Other figures in Appendix \ref{app:q-figs}.}
    \label{fig:q-allinone}
\end{figure}

In addition, each dataset uses specific architectural and optimization settings. These include the number of GNN layers, the number of linear layers per GNN block, the number of MLP layers in the post-processing module, the hidden dimension size, and the optimiser. The complete configuration is summarised in Table~\ref{tab:hyperparams}. The grid search domain is presented in Appendix \ref{app:domain}.

\begin{table}[h]
\centering
\caption{Dataset-specific hyperparameters used in experiments. INF indicates the original graph.}
\label{tab:hyperparams}
\resizebox{\textwidth}{!}{
\begin{tabular}{lcccccc}
\toprule
\textbf{Dataset} & \textbf{$\mathbf{q}$ Values} & \textbf{Layers per $\mathbf{q}$ Value} & \textbf{MLP Layers} & \textbf{Linear per GNN Layer} & \textbf{Hidden Dim} & \textbf{Optimizer} \\
\midrule
MolHIV          & [INF, 1.9] & [6, 4] & 6 & 1 & 1024 & PESG \\
MolPPA          & [INF, 207.0] & [4, 2] & 0 & 2 & 256 & AdamW \\
COLLAB          & [INF, 17.8] & [4, 4] & 4 & 2 & 64 & AdamW \\
DD              & [INF, 6.4] & [6, 4] & 4 & 1 & 64 & AdamW \\
REDDIT-MULTI-12K & [INF, 0.03] & [4, 4] & 0 & 2 & 128 & AdamW \\
\bottomrule
\end{tabular}
}
\end{table}

We compare the baseline GNN trained on the original graphs with our hierarchical variant using KF-based coarsening.  
For each dataset, we report both the training time (in minutes) and evaluation performance (ROC-AUC for molecular datasets, accuracy for social/protein datasets).  
Results are given for the original graphs and for the coarsened graphs, enabling direct comparison of efficiency and predictive power.  

\begin{table}[h]
\centering
\caption{Comparison of training time and evaluation performance between original and coarsened graphs.}
\label{tab:results}
\begin{tabular}{lcccc}
\toprule
\multirow{2}{*}{\textbf{Dataset}} & \multicolumn{2}{c}{\textbf{Training Time (mins)}} & \multicolumn{2}{c}{\textbf{Evaluation}} \\
\cmidrule(lr){2-3} \cmidrule(lr){4-5}
 & Original & Coarsened & Original & Coarsened \\
\midrule
MolHIV            & 6.83 & 2.63 (38.53\%) & 0.794 & 0.787 (99.11\%) \\
MolPPA            & 151.740 & 73.48 (48.42\%) & 0.767 & 0.744 (97.00\%) \\
COLLAB            & 9.41 & 2.57 (27.30\%) & 0.752 & 0.739 (98.27\%) \\
DD                & 0.24 & 0.09 (37.43\%) & 0.749 & 0.740 (98.86\%) \\
REDDIT-MULTI-12K  & 5.45 & 1.99 (36.46\%) & 0.492 & 0.481 (97.78\%) \\
\bottomrule
\end{tabular}
\end{table}

Across multiple benchmark datasets, our experimental results consistently validate the effectiveness of incorporating multi-resolution structure and stochastic abstraction via the proposed Kirchhoff Forest-based graph coarsening framework. The SHAKE-GNN architecture achieves competitive or superior performance compared to standard GCN baselines, while at the same time significantly reducing training time in several configurations. Across all datasets, we achieved at least 97\% of the baseline performance with the cost of at most 50\% of the baseline.

These results further underscore the importance of architectural design choices. Allocating moderate depth to the coarse levels and optionally incorporating lightweight read-out MLPs helps to recover predictive capacity while preserving efficiency. In this way, SHAKE-GNN demonstrates that carefully tuned multi-resolution decomposition can simultaneously reduce computational burden in line with theoretical complexity estimates and maintain strong performance across diverse graph domains, thereby establishing itself as a principled and practical solution for scalable graph classification.  

\section{Conclusion}

In this paper, we presented \textsc{SHAKE-GNN}, a scalable hierarchical graph neural network that couples message passing with a Kirchhoff-Forest-based multi-resolution decomposition. By explicitly modeling structure across resolutions and introducing a data-driven strategy for selecting the smoothing parameter~$q$ via an information–complexity trade-off, \textsc{SHAKE-GNN} attains competitive accuracy while improving computational efficiency. Our analysis quantified how KF coarsening contracts graph size and, in turn, reduces the dominant computational terms, clarifying when multi-resolution processing is provably cheaper than a vanilla GNN. Empirically, across molecular and social benchmarks, \textsc{SHAKE-GNN} matched or surpassed strong baselines while yielding tangible training-time savings. We believe KF-guided hierarchical modelling provides a principled path toward scalable GNNs that preserve both local fidelity and global semantics.

\clearpage

\bibliography{iclr2026_conference}
\bibliographystyle{iclr2026_conference}

\clearpage

\appendix

\section{Proof of Hierarchical Partition Consistency}\label{app:P-proof}

\begin{proposition}
For any hierarchy levels $i<j$, the partition matrices satisfy
\[
\mathbf{P}^{(i,j)} \;=\; \mathbf{P}^{(0,j)} \big(\mathbf{P}^{(0,i)}\big)^{-1}_R.
\]
\end{proposition}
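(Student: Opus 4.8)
The plan is to reduce the identity to two ingredients: a \emph{composition law} for the hierarchy, $\mathbf{P}^{(0,j)} = \mathbf{P}^{(i,j)}\,\mathbf{P}^{(0,i)}$, and the existence of a right inverse of $\mathbf{P}^{(0,i)}$. Granting these, the proposition is one line: right-multiplying the composition law by $(\mathbf{P}^{(0,i)})^{-1}_R$ gives $\mathbf{P}^{(0,j)}(\mathbf{P}^{(0,i)})^{-1}_R = \mathbf{P}^{(i,j)}\,\mathbf{P}^{(0,i)}(\mathbf{P}^{(0,i)})^{-1}_R = \mathbf{P}^{(i,j)}$, using only $\mathbf{P}^{(0,i)}(\mathbf{P}^{(0,i)})^{-1}_R = \mathbf{I}_{n_i}$; in particular the conclusion does not depend on which right inverse is chosen.

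First I would fix notation: level $0$ is the input graph with $n_0$ nodes, levels $1,\dots,N_q$ are the coarsenings with $n_1\ge\cdots\ge n_{N_q}$ nodes, and $\mathbf{P}^{(a,b)}\in\mathbb{R}^{n_b\times n_a}$ has $(s,t)$-entry equal to the pooling-normalised weight with which node $t$ of level $a$ contributes to supernode $s$ of level $b$. The key structural fact is that each Kirchhoff-Forest coarsening step partitions the supernodes of the current level (it merges whole forest components), so the hierarchy is \emph{nested}: the level-$(\ell-1)$ partition of the original vertex set refines the level-$\ell$ partition. Hence the membership weight of an original node $t$ in a level-$\ell$ supernode $s$ equals the sum over level-$(\ell-1)$ supernodes $r$ of (weight of $r$ in $s$) times (weight of $t$ in $r$), which is exactly the $(s,t)$-entry of $\mathbf{P}^{(\ell-1,\ell)}\mathbf{P}^{(0,\ell-1)}$; this gives the single-step identity $\mathbf{P}^{(0,\ell)} = \mathbf{P}^{(\ell-1,\ell)}\mathbf{P}^{(0,\ell-1)}$. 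Iterating from $\ell=i+1$ up to $\ell=j$ and writing $\mathbf{P}^{(i,j)} := \mathbf{P}^{(j-1,j)}\cdots\mathbf{P}^{(i,i+1)}$ yields the composition law.

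Second I would check that $\mathbf{P}^{(0,i)}$ has full row rank $n_i$, hence a right inverse, which one may take to be the Moore--Penrose pseudoinverse $(\mathbf{P}^{(0,i)})^{\top}\big(\mathbf{P}^{(0,i)}(\mathbf{P}^{(0,i)})^{\top}\big)^{-1}$. When the assignment of original nodes to level-$i$ supernodes is hard (the mean/sum pooling only rescaling the rows), distinct supernodes have disjoint nonempty supports, so the rows of $\mathbf{P}^{(0,i)}$ live on disjoint coordinate sets and are linearly independent; full row rank follows. For genuinely soft memberships one needs an extra mild assumption — e.g.\ that each supernode has a ``core'' original node contributing to it alone (the KF root is a natural candidate) — which I would state as a standing hypothesis ensuring $(\mathbf{P}^{(0,i)})^{-1}_R$ is well defined.

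I expect the main obstacle to be the composition law rather than the final algebra. The delicate part is verifying that the normalising constants are compatible with matrix multiplication: when $\mathbf{P}^{(0,\ell)}$ is formed directly from the original graph, the pooling normaliser attached to a level-$\ell$ supernode must equal the product (for mean pooling, the appropriately block-size-weighted combination) of the per-level normalisers, so that ``a mean of block means equals the global mean''. For sum pooling this is automatic; for mean pooling it requires the weighting described above, and I would isolate this as the one nontrivial lemma. A final remark would record that the proposition is to be read for any right inverse of $\mathbf{P}^{(0,i)}$, which is precisely what the one-line computation delivers.
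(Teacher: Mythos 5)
Your proof is correct, and its final step is exactly the paper's: from the composition law $\mathbf{P}^{(i,j)}\mathbf{P}^{(0,i)}=\mathbf{P}^{(0,j)}$, right-multiply by a right inverse of $\mathbf{P}^{(0,i)}$ and cancel. Where you differ is in how the composition law is obtained and in how much of the surrounding structure you verify. The paper simply treats the partition matrices as the linear maps sending (arbitrary) level-$0$ features to coarsened features, writes $\mathbf{P}^{(0,i)}\mathbf{F}^{(0)}=\mathbf{F}^{(i)}$, $\mathbf{P}^{(i,j)}\mathbf{F}^{(i)}=\mathbf{F}^{(j)}$, and concludes $\mathbf{P}^{(i,j)}\mathbf{P}^{(0,i)}=\mathbf{P}^{(0,j)}$ ``since this holds for arbitrary $\mathbf{F}^{(0)}$''; it then takes the specific right inverse $\mathbf{P}^{(0,i)\top}\big(\mathbf{P}^{(0,i)}\mathbf{P}^{(0,i)\top}\big)^{-1}$ without checking that $\mathbf{P}^{(0,i)}\mathbf{P}^{(0,i)\top}$ is invertible. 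You instead derive the composition law combinatorially from the nestedness of the Kirchhoff-Forest hierarchy (entrywise decomposition of membership weights, iterated one level at a time), and you correctly isolate the one genuinely delicate point there: for mean pooling, the per-level normalisers must be size-weighted so that a mean of block means equals the direct-from-original mean, a compatibility the paper never discusses even though it explicitly builds $\mathbf{P}^{(0,\ell)}$ from the original graph. You also supply the full-row-rank argument (disjoint row supports for hard assignments, plus a standing hypothesis for soft memberships) that justifies the existence of the right inverse, and you note the conclusion is independent of which right inverse is chosen. In short, your route buys rigour on the two ingredients the paper treats as definitional or implicit, at the cost of needing the normalisation lemma and a mild hypothesis in the soft case; the paper's route is shorter precisely because it builds the composition law into the definition of the partition matrices.
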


\begin{proof}
By definition, the partition matrix $\mathbf{P}^{(0,i)}$ maps the original feature matrix $\mathbf{F}^{(0)}$ at level $0$ to its coarsened representation $\mathbf{F}^{(i)}$ at level $i$, i.e.,
\[
\mathbf{P}^{(0,i)} \mathbf{F}^{(0)} = \mathbf{F}^{(i)}, 
\qquad 
\mathbf{P}^{(0,j)} \mathbf{F}^{(0)} = \mathbf{F}^{(j)}.
\]
Similarly, the partition $\mathbf{P}^{(i,j)}$ maps between consecutive coarse levels,
\[
\mathbf{P}^{(i,j)} \mathbf{F}^{(i)} = \mathbf{F}^{(j)}.
\]
Combining these relations, we obtain
\[
\mathbf{P}^{(i,j)} \mathbf{P}^{(0,i)} \mathbf{F}^{(0)} = \mathbf{P}^{(0,j)} \mathbf{F}^{(0)}.
\]
Since this holds for arbitrary $\mathbf{F}^{(0)}$, it follows that
\[
\mathbf{P}^{(i,j)} \mathbf{P}^{(0,i)} = \mathbf{P}^{(0,j)}.
\]
Since $\mathbf{P}^{(0,i)}$ is generally a non-square partition matrix, its transpose is not a true inverse. 
Instead, we consider its \emph{right inverse}, defined as
\[
\big(\mathbf{P}^{(0,i)}\big)^{-1}_R 
= \mathbf{P}^{(0,i)\top}\big(\mathbf{P}^{(0,i)} \mathbf{P}^{(0,i)\top}\big)^{-1},
\]
which satisfies
\(
\mathbf{P}^{(0,i)} \big(\mathbf{P}^{(0,i)}\big)^{-1}_R = \mathbf{I}.
\)
Substituting this into the relation yields
\[
\mathbf{P}^{(i,j)} = \mathbf{P}^{(0,j)} \big(\mathbf{P}^{(0,i)}\big)^{-1}_R.
\]
\end{proof}

\section{From the Inverse Formulation to the Spectral Formulation}\label{app:proofs}

\subsection{Notation and Assumptions}
Let $\mathbf{L}\in\mathbb{R}^{n\times n}$ be an unnormalised graph Laplacian, which is symmetric positive semidefinite. 
Let $\mathbf{L}=\mathbf{U}\mathbf{\Lambda}\mathbf{U}^\top$ be an eigen-decomposition, where $\mathbf{U}\in\mathbb{R}^{n\times n}$ is orthogonal, and $\mathbf{\Lambda}=\mathrm{diag}(\mu_1,\dots,\mu_n)$ with $0=\mu_1\le \cdots \le \mu_n$. 
Given node features $\mathbf{X}_v\in\mathbb{R}^{n\times f_v}$ and a regularisation parameter $q>0$, define the Tikhonov smoothing operator
\[
\mathbf{K}(q)=q(\mathbf{L}+q\mathbf{I})^{-1}.
\]
We write the smoothed signal as $\widehat{\mathbf{X}}_v=\mathbf{K}(q)\mathbf{X}_v$, and the residual as $\mathbf{R}=\mathbf{X}_v-\widehat{\mathbf{X}}_v$. 
For any matrix $\mathbf{M}$, $\|\mathbf{M}\|_F$ denotes the Frobenius norm and $\mathrm{tr}(\mathbf{M})$ the trace.

\subsection{Smoothing Operator in the Spectral Basis}
\begin{lemma}[Spectral form of the resolvent]\label{lem:resolvent}
For any $q>0$, 
\[
(\mathbf{L}+q\mathbf{I})^{-1}=\mathbf{U}(\mathbf{\Lambda}+q\mathbf{I})^{-1}\mathbf{U}^\top.
\]
\end{lemma}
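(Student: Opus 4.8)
The statement to prove is Lemma~\ref{lem:resolvent}: for any $q > 0$, $(\mathbf{L}+q\mathbf{I})^{-1} = \mathbf{U}(\mathbf{\Lambda}+q\mathbf{I})^{-1}\mathbf{U}^\top$.

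The plan is to use the orthogonal eigendecomposition $\mathbf{L} = \mathbf{U}\mathbf{\Lambda}\mathbf{U}^\top$ directly, together with $\mathbf{U}\mathbf{U}^\top = \mathbf{U}^\top\mathbf{U} = \mathbf{I}$. First I would rewrite $\mathbf{L} + q\mathbf{I}$ in the eigenbasis: since $q\mathbf{I} = q\,\mathbf{U}\mathbf{U}^\top = \mathbf{U}(q\mathbf{I})\mathbf{U}^\top$, we get $\mathbf{L} + q\mathbf{I} = \mathbf{U}\mathbf{\Lambda}\mathbf{U}^\top + \mathbf{U}(q\mathbf{I})\mathbf{U}^\top = \mathbf{U}(\mathbf{\Lambda}+q\mathbf{I})\mathbf{U}^\top$. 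Next I would observe that $\mathbf{\Lambda} + q\mathbf{I} = \mathrm{diag}(\mu_1+q,\dots,\mu_n+q)$ is invertible because $q > 0$ and each $\mu_i \ge 0$, so every diagonal entry $\mu_i + q > 0$; hence its inverse is the diagonal matrix $(\mathbf{\Lambda}+q\mathbf{I})^{-1} = \mathrm{diag}\big(\tfrac{1}{\mu_1+q},\dots,\tfrac{1}{\mu_n+q}\big)$.

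Then I would verify the claimed identity by direct multiplication: compute
\[
\big(\mathbf{U}(\mathbf{\Lambda}+q\mathbf{I})\mathbf{U}^\top\big)\big(\mathbf{U}(\mathbf{\Lambda}+q\mathbf{I})^{-1}\mathbf{U}^\top\big)
= \mathbf{U}(\mathbf{\Lambda}+q\mathbf{I})(\mathbf{U}^\top\mathbf{U})(\mathbf{\Lambda}+q\mathbf{I})^{-1}\mathbf{U}^\top
= \mathbf{U}\mathbf{U}^\top = \mathbf{I},
\]
using $\mathbf{U}^\top\mathbf{U} = \mathbf{I}$ in the middle and $(\mathbf{\Lambda}+q\mathbf{I})(\mathbf{\Lambda}+q\mathbf{I})^{-1} = \mathbf{I}$. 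The same computation in the other order gives $\mathbf{I}$ as well, so $\mathbf{U}(\mathbf{\Lambda}+q\mathbf{I})^{-1}\mathbf{U}^\top$ is a genuine two-sided inverse of $\mathbf{L}+q\mathbf{I}$, and by uniqueness of inverses it equals $(\mathbf{L}+q\mathbf{I})^{-1}$.

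There is essentially no obstacle here: the result is a standard fact about functions of symmetric matrices, and the only point requiring a word of care is confirming invertibility, which follows immediately from positive semidefiniteness of $\mathbf{L}$ and strict positivity of $q$. The proof is a three-line verification, and the remainder of Appendix~\ref{app:proofs} (the reconstruction-error and Dirichlet-energy decompositions) will build on this lemma by substituting the spectral form of $\mathbf{K}(q) = q\,\mathbf{U}(\mathbf{\Lambda}+q\mathbf{I})^{-1}\mathbf{U}^\top$ and expanding in the orthonormal eigenbasis.
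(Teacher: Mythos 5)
Your proof is correct and follows essentially the same route as the paper's: both rewrite $\mathbf{L}+q\mathbf{I}=\mathbf{U}(\mathbf{\Lambda}+q\mathbf{I})\mathbf{U}^\top$ using orthogonality of $\mathbf{U}$ and then invert, with your version merely being more explicit about invertibility of $\mathbf{\Lambda}+q\mathbf{I}$ and the two-sided verification. No issues.
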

\begin{proof}
Since $\mathbf{L}=\mathbf{U}\mathbf{\Lambda}\mathbf{U}^\top$ with $\mathbf{U}$ orthogonal, 
\[
\mathbf{L}+q\mathbf{I}=\mathbf{U}(\mathbf{\Lambda}+q\mathbf{I})\mathbf{U}^\top.
\]
Taking the inverse on both sides and using $\mathbf{U}^{-1}=\mathbf{U}^\top$ yields the claim.
\end{proof}

\begin{lemma}[Spectral form of the smoother]\label{lem:smoother}
Let $h_i(q)\coloneqq \tfrac{q}{\mu_i+q}$. Then
\[
\mathbf{K}(q)=\mathbf{U}\,\mathrm{diag}\!\big(h_i(q)\big)\,\mathbf{U}^\top.
\]
\end{lemma}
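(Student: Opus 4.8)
The plan is to derive the spectral form of $\mathbf{K}(q)$ directly from Lemma~\ref{lem:resolvent}, which already supplies the spectral decomposition of the resolvent $(\mathbf{L}+q\mathbf{I})^{-1}$. Since $\mathbf{K}(q)$ is obtained from the resolvent by a single scalar multiplication by $q>0$, essentially no new work is required beyond bookkeeping with the diagonal factor.

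First I would invoke Lemma~\ref{lem:resolvent} to write $(\mathbf{L}+q\mathbf{I})^{-1}=\mathbf{U}(\mathbf{\Lambda}+q\mathbf{I})^{-1}\mathbf{U}^\top$. Then I would substitute this into the definition $\mathbf{K}(q)=q(\mathbf{L}+q\mathbf{I})^{-1}$ and pull the scalar $q$ through the orthogonal conjugation, obtaining $\mathbf{K}(q)=\mathbf{U}\big(q(\mathbf{\Lambda}+q\mathbf{I})^{-1}\big)\mathbf{U}^\top$. Finally I would observe that $q(\mathbf{\Lambda}+q\mathbf{I})^{-1}$ is diagonal with entries $q/(\mu_i+q)=h_i(q)$, since $\mathbf{\Lambda}+q\mathbf{I}=\mathrm{diag}(\mu_i+q)$ is diagonal with strictly positive entries (as $\mu_i\ge 0$ and $q>0$, the inverse is well defined), giving $\mathbf{K}(q)=\mathbf{U}\,\mathrm{diag}(h_i(q))\,\mathbf{U}^\top$.

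There is essentially no obstacle here: the only point meriting a word of care is that the inverse $(\mathbf{\Lambda}+q\mathbf{I})^{-1}$ exists, which follows immediately from positive semidefiniteness of $\mathbf{L}$ together with $q>0$. One could alternatively prove the lemma from scratch by verifying that $\mathbf{U}\,\mathrm{diag}(h_i(q))\,\mathbf{U}^\top$ times $(\mathbf{L}+q\mathbf{I})$ equals $q\mathbf{I}$, using $\mathbf{U}^\top\mathbf{U}=\mathbf{I}$ and the identity $h_i(q)(\mu_i+q)=q$, but routing through Lemma~\ref{lem:resolvent} is the cleanest path and keeps the argument to a couple of lines.
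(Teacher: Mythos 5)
Your proposal is correct and follows the same route as the paper: invoke Lemma~\ref{lem:resolvent} for the spectral form of the resolvent, multiply by the scalar $q$, and identify the diagonal entries $q/(\mu_i+q)=h_i(q)$. The added remark about invertibility of $\mathbf{\Lambda}+q\mathbf{I}$ is a fine (if implicit in the paper) detail, but nothing in the argument differs in substance.
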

\begin{proof}
By Lemma~\ref{lem:resolvent},
\[
\mathbf{K}(q)=q(\mathbf{L}+q\mathbf{I})^{-1}
= q\,\mathbf{U}(\mathbf{\Lambda}+q\mathbf{I})^{-1}\mathbf{U}^\top
=\mathbf{U}\,\mathrm{diag}\!\Big(\tfrac{q}{\mu_i+q}\Big)\,\mathbf{U}^\top.
\]
\end{proof}

Let $\widetilde{\mathbf{X}}_v\coloneqq \mathbf{U}^\top \mathbf{X}_v$ be the spectral coefficients and define per-mode energies for $i=1,\dots,n$,
\[
S_i \;\coloneqq\; \|\widetilde{\mathbf{X}}_{v,i:}\|_2^2 \;=\; \sum_{j=1}^{f_v} \widetilde{\mathbf{X}}_{v,ij}^2.
\]
By Lemma~\ref{lem:smoother},
\[
\widehat{\mathbf{X}}_v
=\mathbf{K}(q)\mathbf{X}_v
=\mathbf{U}\,\mathrm{diag}(h_i(q))\,\mathbf{U}^\top \mathbf{X}_v
\;\Rightarrow\;
\widehat{\widetilde{\mathbf{X}}}_v
=\mathbf{U}^\top \widehat{\mathbf{X}}_v
=\mathrm{diag}(h_i(q))\,\widetilde{\mathbf{X}}_v.
\]
Therefore, the spectral residual is
\[
\widetilde{\mathbf{R}}
=\widetilde{\mathbf{X}}_v-\widehat{\widetilde{\mathbf{X}}}_v
=\mathrm{diag}(1-h_i(q))\,\widetilde{\mathbf{X}}_v.
\]

\subsection{Feature Reconstruction Loss: Equivalence}
\begin{proposition}\label{prop:feat}
Let $\mathbf{L} \in \mathbb{R}^{n\times n}$ be the graph Laplacian with eigendecomposition $\mathbf{L}=\mathbf{U}\mathbf{\Lambda}\mathbf{U}^\top$, 
where $\mathbf{U}$ is orthogonal and $\mathbf{\Lambda}=\mathrm{diag}(\mu_1,\ldots,\mu_n)$.  
For node features $\mathbf{X}_v\in\mathbb{R}^{n\times f_v}$ and smoothing parameter $q>0$, 
define the Tikhonov smoothing operator 
\[
\mathbf{K}(q) = q(\mathbf{L}+q\mathbf{I})^{-1}, \qquad \widehat{\mathbf{X}}_v = \mathbf{K}(q)\mathbf{X}_v.
\]
Let $\widetilde{\mathbf{X}}_v=\mathbf{U}^\top \mathbf{X}_v$ be the spectral coefficients and 
$S_i=\|\widetilde{\mathbf{X}}_{v,i:}\|_2^2=\sum_{j=1}^{f_v} \widetilde{\mathbf{X}}_{v,ij}^2$ the per-mode energies.  
With the spectral gain $h_i(q)=\tfrac{q}{\mu_i+q}$, the feature reconstruction loss admits the exact spectral form
\begin{equation}\label{eq:feat-loss}
\|\mathbf{X}_v-\widehat{\mathbf{X}}_v\|_F^2 \;=\; \sum_{i=1}^n (1-h_i(q))^2\,S_i,
\qquad
\|\mathbf{X}_v\|_F^2 \;=\; \sum_{i=1}^n S_i.
\end{equation}
\end{proposition}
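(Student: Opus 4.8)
The plan is to exploit the orthogonal invariance of the Frobenius norm together with the spectral diagonalisation of $\mathbf{K}(q)$ already established in Lemma~\ref{lem:smoother}. First I would recall that for any matrix $\mathbf{M}\in\mathbb{R}^{n\times f_v}$ and any orthogonal $\mathbf{U}$, one has $\|\mathbf{U}\mathbf{M}\|_F^2 = \|\mathbf{M}\|_F^2$, since $\|\mathbf{U}\mathbf{M}\|_F^2 = \mathrm{tr}(\mathbf{M}^\top\mathbf{U}^\top\mathbf{U}\mathbf{M}) = \mathrm{tr}(\mathbf{M}^\top\mathbf{M}) = \|\mathbf{M}\|_F^2$. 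Applying this with $\mathbf{U}^\top$ (also orthogonal) gives $\|\mathbf{X}_v\|_F^2 = \|\mathbf{U}^\top\mathbf{X}_v\|_F^2 = \|\widetilde{\mathbf{X}}_v\|_F^2$, and the latter expands row-wise as $\sum_{i=1}^n\|\widetilde{\mathbf{X}}_{v,i:}\|_2^2 = \sum_{i=1}^n S_i$, which is the second identity.

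Next I would handle the residual. Using $\widehat{\mathbf{X}}_v = \mathbf{K}(q)\mathbf{X}_v = \mathbf{U}\,\mathrm{diag}(h_i(q))\,\mathbf{U}^\top\mathbf{X}_v$ from Lemma~\ref{lem:smoother}, the residual is $\mathbf{R} = \mathbf{X}_v - \widehat{\mathbf{X}}_v = \mathbf{U}\big(\mathbf{I} - \mathrm{diag}(h_i(q))\big)\mathbf{U}^\top\mathbf{X}_v = \mathbf{U}\,\mathrm{diag}(1-h_i(q))\,\widetilde{\mathbf{X}}_v$. Applying Frobenius invariance under $\mathbf{U}$ gives $\|\mathbf{R}\|_F^2 = \|\mathrm{diag}(1-h_i(q))\,\widetilde{\mathbf{X}}_v\|_F^2$. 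Since left-multiplication by a diagonal matrix scales the $i$-th row of $\widetilde{\mathbf{X}}_v$ by $(1-h_i(q))$, the Frobenius norm decomposes as $\sum_{i=1}^n (1-h_i(q))^2\,\|\widetilde{\mathbf{X}}_{v,i:}\|_2^2 = \sum_{i=1}^n (1-h_i(q))^2\,S_i$, which is the first identity in Equation~\eqref{eq:feat-loss}.

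There is essentially no serious obstacle here; the argument is a two-line consequence of orthogonal invariance plus the spectral form of the smoother. The only point requiring mild care is bookkeeping: making explicit that $\mathrm{diag}(1-h_i(q))$ acts on the \emph{rows} of $\widetilde{\mathbf{X}}_v$ (the spectral modes), so that the per-mode energies $S_i$ pick up the factor $(1-h_i(q))^2$ rather than, say, a column-wise reweighting. Once that indexing is pinned down, both displayed identities follow immediately, and I would present the whole thing as a short computation rather than a multi-step proof.
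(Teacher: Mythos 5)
Your proposal is correct and follows essentially the same route as the paper: the paper's Appendix B.2 uses Lemma~\ref{lem:smoother} to express the residual in the spectral basis as $\widetilde{\mathbf{R}}=\mathrm{diag}(1-h_i(q))\,\widetilde{\mathbf{X}}_v$, and the proposition then follows exactly as you argue, by orthogonal invariance of the Frobenius norm and the row-wise decomposition into per-mode energies $S_i$. No gaps; your bookkeeping remark about the diagonal factor acting on rows (spectral modes) is precisely the point the paper's derivation relies on.
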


\subsection{Dirichlet Loss: Equivalence}
\begin{proposition}\label{prop:dir}
Let $\mathbf{L} \in \mathbb{R}^{n\times n}$ be the graph Laplacian with eigendecomposition $\mathbf{L}=\mathbf{U}\mathbf{\Lambda}\mathbf{U}^\top$, 
where $\mathbf{U}$ is orthogonal and $\mathbf{\Lambda}=\mathrm{diag}(\mu_1,\ldots,\mu_n)$.  
For node features $\mathbf{X}_v\in\mathbb{R}^{n\times f_v}$ and smoothing parameter $q>0$, 
define the Tikhonov smoothing operator 
\[
\mathbf{K}(q) = q(\mathbf{L}+q\mathbf{I})^{-1}, \qquad \widehat{\mathbf{X}}_v = \mathbf{K}(q)\mathbf{X}_v.
\]
Let $\widetilde{\mathbf{X}}_v=\mathbf{U}^\top \mathbf{X}_v$ be the spectral coefficients and 
$S_i=\|\widetilde{\mathbf{X}}_{v,i:}\|_2^2=\sum_{j=1}^{f_v} \widetilde{\mathbf{X}}_{v,ij}^2$ the per-mode energies.  
With the spectral gain $h_i(q)=\tfrac{q}{\mu_i+q}$ and residual $\mathbf{R}=\mathbf{X}_v-\widehat{\mathbf{X}}_v$, 
the Dirichlet energy loss admits the exact spectral form
\begin{equation}\label{eq:dir-loss}
\mathrm{tr}\big((\mathbf{X}_v-\widehat{\mathbf{X}}_v)^\top\mathbf{L}(\mathbf{X}_v-\widehat{\mathbf{X}}_v)\big)
=\sum_{i=1}^n \mu_i (1-h_i(q))^2\,S_i,
\qquad
\mathrm{tr}(\mathbf{X}_v^\top\mathbf{L}\mathbf{X}_v)=\sum_{i=1}^n \mu_i S_i.
\end{equation}
\end{proposition}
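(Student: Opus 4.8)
The plan is to reduce the Dirichlet quadratic form to the spectral basis, exactly as in the feature-loss computation of Proposition~\ref{prop:feat}, and then exploit that $\mathbf{L}$ and $\mathbf{I}-\mathbf{K}(q)$ are simultaneously diagonalised by $\mathbf{U}$, so that the trace splits into a sum of per-eigenmode contributions with the extra diagonal weights $\mu_i$.

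First I would invoke Lemma~\ref{lem:smoother} together with the computation immediately following it, which already establishes that the spectral residual is $\widetilde{\mathbf{R}}=\mathbf{U}^\top\mathbf{R}=\mathrm{diag}(1-h_i(q))\,\widetilde{\mathbf{X}}_v$, i.e.\ $\widetilde{\mathbf{R}}_{ij}=(1-h_i(q))\,\widetilde{\mathbf{X}}_{v,ij}$ for every feature column $j\in\{1,\dots,f_v\}$. This is the only structural input required. I would then rewrite the Dirichlet form using orthogonal invariance of the trace: since $\mathbf{L}=\mathbf{U}\mathbf{\Lambda}\mathbf{U}^\top$ and $\mathbf{U}^\top\mathbf{U}=\mathbf{I}$,
\[
\mathrm{tr}\big(\mathbf{R}^\top\mathbf{L}\mathbf{R}\big)
=\mathrm{tr}\big((\mathbf{U}^\top\mathbf{R})^\top\mathbf{\Lambda}\,(\mathbf{U}^\top\mathbf{R})\big)
=\mathrm{tr}\big(\widetilde{\mathbf{R}}^\top\mathbf{\Lambda}\,\widetilde{\mathbf{R}}\big).
\]
Because $\mathbf{\Lambda}$ is diagonal, $\mathrm{tr}(\widetilde{\mathbf{R}}^\top\mathbf{\Lambda}\widetilde{\mathbf{R}})=\sum_{i=1}^n\sum_{j=1}^{f_v}\mu_i\,\widetilde{\mathbf{R}}_{ij}^2$; substituting the residual identity and collecting the column sum $\sum_{j}\widetilde{\mathbf{X}}_{v,ij}^2=S_i$ yields $\sum_{i=1}^n \mu_i\,(1-h_i(q))^2\,S_i$. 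The denominator identity follows by the same manipulation applied to $\mathbf{X}_v$ directly (equivalently, by setting $h_i\equiv 0$ in the above): $\mathrm{tr}(\mathbf{X}_v^\top\mathbf{L}\mathbf{X}_v)=\mathrm{tr}(\widetilde{\mathbf{X}}_v^\top\mathbf{\Lambda}\widetilde{\mathbf{X}}_v)=\sum_{i=1}^n\mu_i S_i$.

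I do not anticipate a genuine obstacle here: the argument is the feature-loss computation of Proposition~\ref{prop:feat} with an additional diagonal factor $\mu_i$ inserted between the two copies of the residual. The one point worth stating with care is that $\mathbf{L}$ and $\mathbf{I}-\mathbf{K}(q)$ share the eigenbasis $\mathbf{U}$, so no cross terms between distinct eigenmodes survive and the zero mode $\mu_1=0$ contributes nothing; the remainder is just bookkeeping over the $f_v$ feature columns, which the per-mode energies $S_i$ package cleanly.
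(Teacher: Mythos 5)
Your proof is correct and follows the same route the paper intends: it uses the spectral residual identity $\widetilde{\mathbf{R}}=\mathrm{diag}(1-h_i(q))\,\widetilde{\mathbf{X}}_v$ established after Lemma~\ref{lem:smoother}, diagonalises $\mathbf{L}$ in the trace, and collects per-mode energies $S_i$, which is exactly the (left-implicit) derivation the paper relies on for Proposition~\ref{prop:dir}. No gaps; the remark about the zero eigenvalue and the absence of cross terms is correct bookkeeping.
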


\subsection{Edge/Line-Graph Case}
Let $\mathbf{L}_e=\mathbf{U}_e\mathbf{\Lambda}_e\mathbf{U}_e^\top$ be the line-graph Laplacian and $\mathbf{X}_e\in\mathbb{R}^{m\times f_e}$ the edge features (optionally aggregated to undirected edges).  
Define $h_i^e(q)=\tfrac{q}{\mu_i^e+q}$, $\widetilde{\mathbf{X}}_e=\mathbf{U}_e^\top \mathbf{X}_e$, and $S_i^e=\|\widetilde{\mathbf{X}}_{e,i:}\|_2^2$.  
Then, repeating the node-side derivations verbatim,
\begin{align*}
\|\mathbf{X}_e-\widehat{\mathbf{X}}_e\|_F^2 &= \sum_{i=1}^{m} (1-h_i^e(q))^2\,S_i^e, 
& \|\mathbf{X}_e\|_F^2 &= \sum_{i=1}^{m} S_i^e,\\
\mathrm{tr}\big((\mathbf{X}_e-\widehat{\mathbf{X}}_e)^\top\mathbf{L}_e(\mathbf{X}_e-\widehat{\mathbf{X}}_e)\big) 
&= \sum_{i=1}^{m} \mu_i^e (1-h_i^e(q))^2\,S_i^e, 
& \mathrm{tr}(\mathbf{X}_e^\top\mathbf{L}_e\mathbf{X}_e) &= \sum_{i=1}^{m} \mu_i^e S_i^e,\\
\mathrm{tr}\,\mathbf{K}_e(q) &= \sum_{i=1}^{m} h_i^e(q).
\end{align*}

\section{Additional $q$-Choice Figures}\label{app:q-figs}

In the main text (Figure~\ref{fig:q-allinone}), we reported the $q$-choice curves for the OGBG-MOLHIV and OGBG-PPA datasets. For completeness, we provide here the corresponding plots for the remaining datasets, that are COLLAB, DD, and REDDIT-MULTI-12K. For datasets without edge features, the corresponding edge-related quantities remain constant at zero.

\begin{figure}[h]
    \centering

    \begin{subfigure}[t]{0.48\linewidth}
        \centering
        \includegraphics[width=\linewidth]{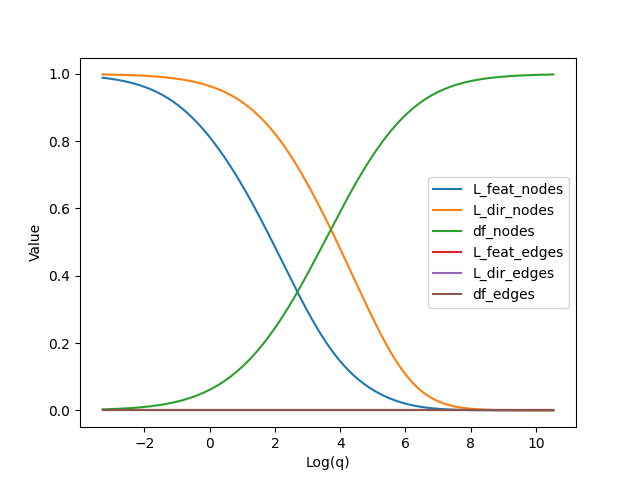}
        \caption{The $q$-choice curves for the COLLAB dataset.}
        \label{fig:q-collab}
    \end{subfigure}
    \hfill
    \begin{subfigure}[t]{0.48\linewidth}
        \centering
        \includegraphics[width=\linewidth]{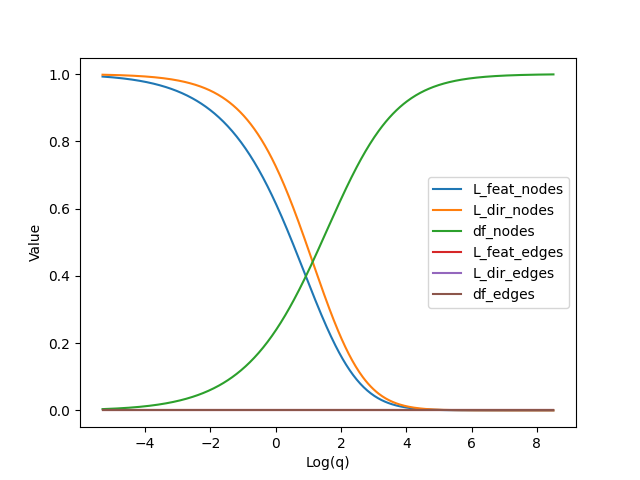}
        \caption{The $q$-choice curves for the DD dataset.}
        \label{fig:q-dd}
    \end{subfigure}
    
    \vskip\baselineskip
    
    \begin{subfigure}[t]{0.48\linewidth}
        \centering
        \includegraphics[width=\linewidth]{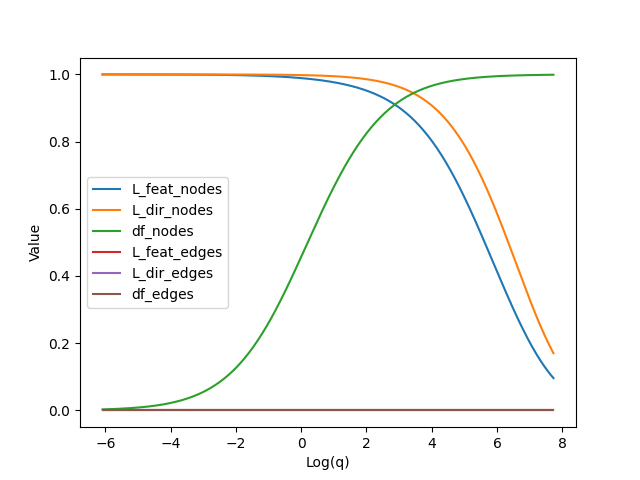}
        \caption{The $q$-choice curves for the REDDIT-MULTI-12K dataset.}
        \label{fig:q-reddit}
    \end{subfigure}

    \caption{The $q$-choice curves for COLLAB, DD, and REDDIT-MULTI-12K datasets.}
    \label{fig:q-subfigs}
\end{figure}

\section{Experiment Datasets}\label{app:dataset}

\paragraph{OGBG-MolHIV.}  
The \texttt{ogbg-molhiv} dataset is a molecular graph classification benchmark originating from the MoleculeNet collection \citep{wu2018moleculenet} and integrated into the Open Graph Benchmark (OGB) \citep{hu2020open}. Each sample is a molecule represented as a graph, with atoms as nodes and chemical bonds as edges. The prediction task is binary classification, determining whether a molecule exhibits HIV-inhibitory activity, evaluated using the ROC-AUC metric.

\paragraph{OGBG-MolPPA.}  
The \texttt{ogbg-ppa} dataset is a protein–protein association network from the OGB suite. Nodes correspond to proteins and edges indicate biological associations such as functional interactions. The task is graph classification, where the objective is to predict the biological class of a protein association graph, providing a challenging benchmark due to the large graph sizes and heterogeneity.

\paragraph{COLLAB.}  
The \texttt{COLLAB} dataset is a social network graph classification benchmark derived from scientific collaboration networks. Each graph corresponds to the ego-network of a researcher, where nodes represent collaborators and edges represent co-authorship. The classification task is multi-class, aiming to predict the scientific field of the researcher based on the collaboration structure.

\paragraph{DD.}  
The \texttt{DD} dataset (D\&D) is a protein graph classification benchmark. Each graph corresponds to a protein, where nodes represent amino acids and edges are formed between pairs of amino acids that are spatially close in the 3D structure. The task is binary classification, predicting whether a protein is an enzyme or a non-enzyme.

\paragraph{REDDIT-MULTI-12K.}  
The \texttt{REDDIT-MULTI-12K} dataset is a social network benchmark derived from Reddit discussion threads. Each graph represents a discussion thread, where nodes are users and edges indicate interactions (e.g., replies). The classification task is multi-class, predicting the category of the discussion thread (e.g., technology, politics, sports).

\section{Hyperparameter Grid Search Domain}\label{app:domain}

For reproducibility, we document the hyperparameter search domain used for tuning SHAKE-GNN and baseline models.  
The search was conducted via grid search over the following ranges:

\begin{itemize}
    \item \textbf{Number of GNN layers per resolution level:} \(\{2,\,4,\,6,\,8,\,10,\,12\}\).
    \item \textbf{Read-out layers:} \(\{0,\,2,\,4,\,6,\,8,\,10,\,12\}\).
    \item \textbf{Linear layers per GNN layer:} \( \{1,\,2,\,3,\,4\} \).
    \item \textbf{Hidden dimension:} \(\{64,\,128,\,256,\,512,\,1024,\,2048\}\).
\end{itemize}

\end{document}